\newtheorem{theorem}{Theorem}
\newtheorem{lemma}{Lemma}
\newtheorem{assumption}{Assumption}
\newtheorem{proposition}{Proposition}
\newtheorem{proof}{Proof}
\newtheorem{example}{Example}
\newtheorem{remark}{Remark}
\newtcolorbox{apptocbox}{
  colback=gray!5, colframe=black!20, boxrule=0.4pt,
  left=6pt, right=6pt, top=6pt, bottom=6pt,
  title=Appendix Contents
}
\algrenewcommand\alglinenumber[1]{\scriptsize #1}
\begin{document}
\captionsetup[algorithm]{font=small,skip=4pt}

% ---------- Title / Author block (replacing AISTATS macros) ----------

\title{Ensuring Calibration Robustness in Split Conformal Prediction Under Adversarial Attacks}

\author{%
  \begin{tabular}{cc}
    Xunlei Qian & Yue Xing \\
    \multicolumn{2}{c}{Michigan State University}
  \end{tabular}
}
\date{}  % no date

\maketitle

% ---------- Abstract (content unchanged) ----------

\begin{abstract}

 Conformal prediction (CP) provides distribution-free, finite-sample coverage guarantees but critically relies on exchangeability, a condition often violated under distribution shift. We study the robustness of split conformal prediction under adversarial perturbations at test time, focusing on both coverage validity and the resulting prediction set size. Our theoretical analysis characterizes how the strength of adversarial perturbations during calibration affects coverage guarantees under adversarial test conditions. We further examine the impact of adversarial training at the model-training stage. Extensive experiments support our theory: (i) Prediction coverage varies monotonically with the calibration-time attack strength, enabling the use of nonzero calibration-time attack to predictably control coverage under adversarial tests; (ii) target coverage can hold over a range of test-time attacks: with a suitable calibration attack, coverage stays within any chosen tolerance band across a contiguous set of perturbation levels; and (iii) adversarial training at the training stage produces tighter prediction sets that retain high informativeness.
\end{abstract}

\section{Introduction}
With the rapid adoption of deep learning in high-stakes domains, such as survival analysis \citep{candes2023conformalized} and chest X-ray report generation \citep{gui2024conformal}, uncertainty quantification has become an increasingly intricate and essential task for ensuring 
reliable predictive performance. Conformal prediction (CP) \citep{vovk2005algorithmic} offers a powerful, distribution-free framework that provides finite-sample validity by constructing prediction sets calibrated to a user-specified coverage level.  For example, CP can support clinical decision workflows in which physicians must deploy complex models for diagnosis \citep{banerji2023clinical, kiyani2025decision, hullman2025conformal}.

Despite its flexibility, CP typically relies on independent and identically distributed (i.i.d.) or, more generally, at least exchangeable data. These assumptions are often violated in practice \citep{barber2023conformal}. When exchangeability is broken, the conformity-score distribution in the calibration set need not match that of the test data due to distribution shift commonly observed in real-world settings \citep{moreno2012unifying,sugiyama2007covariate,koh2021wilds}
A large body of work proposes mitigation strategies within the classical CP paradigm: some methods address random perturbations at test time \citep{gendler2022adversarially,yan2024provablyrobustconformalprediction}, while others demonstrate that adversarial perturbations can be particularly harmful for deep models \citep{kos2017delving}. These observations motivate the study of CP under non-exchangeable conditions.

On the other hand, Adversarial attack is a prominent form of distribution shift in which inputs are deliberately perturbed in worst-case directions to degrade performance \citep{biggio2013evasion,goodfellow2014explaining}. Although such perturbations are often imperceptible to humans \citep{wang2019improving}, the deep neural network or even transformer-based models like ViT(vanilla vision transformer) can suffer substantial drops in performance compared with unperturbed inputs \citep{shao2021adversarial,aldahdooh2021reveal,narodytska2017simple,miller2020adversarial}. Prior work spans attack generation \citep{xiao2018generating,assion2019attack}, adversarial training \citep{shafahi2019adversarial,andriushchenko2020understanding, shafahi2020universal}, and adversarial robustness \cite{bai2021recent, silva2020opportunities}.

In this paper, we examine test-time adversarial perturbations through the lens of conformal prediction. We analyze how such attacks affect validity (coverage) and efficiency (prediction set size), how calibration under adversarial perturbations can improve robustness at the test time, and how adversarial training will improve the efficiency of split conformal prediction. Our contributions are as follows:  

\begin{itemize}
    \item We provide a theoretical analysis of how the strength of adversarial perturbations applied to the calibration set affects prediction validity on both clean and adversarially perturbed test data. In particular, we show that test-time prediction accuracy is a monotone (non-decreasing) function of the calibration attack strength (Theorem~\ref{thm:theorem1}).
    \item We show that introducing a proper adversarial perturbation during calibration yields more robust predictions under potential test-time attacks. Specifically, for a 90\% target accuracy, CP attains a “reasonable” accuracy (e.g., 87\%–93\%) across a wider range of test-time attack strengths (Theorem~\ref{thm:theorem2}).
    \item We show that adversarial training can reduce prediction set size for Split conformal prediction, thereby improving both robustness and informativeness for deep neural networks (Theorem~\ref{thm:theorem3}).
    \item Experiments are conducted to verify the observations from the above theorems.
\end{itemize}

\section{Related Work}

\subsection{Adversarial training}

Adversarial examples have become central to the study of machine learning safety because of their implications for model robustness \citep{goodfellow2014explaining}. A growing body of research has explored the use of adversarial training as a primary defense. Despite substantial progress, modern models remain vulnerable to small, often imperceptible, perturbations in both regression and classification tasks. Recent theoretical and empirical results clarify a fundamental robustness–accuracy tradeoff in adversarially trained models. In linear settings, where analysis is more tractable, adversarial training improves robustness at the expense of standard accuracy. \citep{javanmard2020precise} precisely characterize this tradeoff and establish limits on the accuracy achievable by any algorithm. More recently, Xing et al. \citep{xing2024adversarial} showed that two-stage training with additional unlabeled data and pseudolabeling can enhance robustness even in two-layer neural networks. Taken together, these findings highlight the particular effectiveness of adversarial training for simple models, while also offering insights for extending such defenses to richer architectures. Other related literature can be found in \citep{goodfellow2015explaining,madry2018towards,zhang2019trades,shafahi2019free,carmon2019unlabeled,xie2020advprop}.

\subsection{Adversarially robust conformal prediction}
One critical high-stakes task in machine learning is constructing reliable prediction sets for classification tasks in the presence of adversarially perturbed test data. Recent work proposes several approaches toward this goal. \citep{gendler2022adversarially} demonstrated that standard CP fails to provide valid coverage under adversarial perturbations bounded in $\ell_2$ norm and introduce Randomized Smoothed Conformal Prediction (RSCP), which combines random smoothing with a modified thresholding rule to improve robustness. Extending this line, \citep{yan2024provablyrobustconformalprediction} proposed two approaches, Post-Training Transformation (PTT) and Robust Conformal Training (RCT), that significantly reduce the size of prediction sets while maintaining valid coverage under adversarial conditions. More recently, \citep{scholten2025provably} aggregate predictions from multiple models trained on partitioned datasets, yielding provable robustness to both test-time adversarial attacks and training-time data poisoning. Complementary developments include probabilistic robustness via quantile-of-quantiles, adversarially valid guarantees for sequential settings, certifiable reasoning with probabilistic circuits, Lipschitz-bounded models for scalable certification, and conformal training/attack frameworks that directly optimize set size under threat \citep{ghosh2023prcp,bastani2022mvp,areces2025onlinecp,kang2024colep,massena2025lipschitzrcp,bao2025opsa}. Collectively, these contributions advance provably robust conformal prediction in adversarial scenarios by addressing both validity and efficiency.

\section{Preliminaries}

The following section provides the general framework of split conformal prediction (Split CP) which we will adopt in our theorem and experiments.

\subsection{Split Conformal Prediction}\label{sec:split-cp}
Split CP provides distribution-free, finite-sample coverage guarantees at user-specified levels for both classification and regression \citep{Lei03072018,romano2020classification}. We adopt split CP as our primary procedure for constructing prediction sets. 

Consider a multi-class classification problem, where we aim to train a model \( f: X \to T \), with \( T = \{1, 2, \dots, K\} \) representing the set of class labels and \( X \) denoting the input space. To construct prediction sets with coverage guarantees, we adopt the \emph{Split conformal prediction} framework \citep{romano2020classification,Lei03072018}. The data is partitioned into three disjoint subsets: a training set \( I_1 \) with sample size $n_1$, a calibration set \( I_2 \) with sample size $n_2$, and a test set \( I_3 \) with sample size $n_3$.

We define the nonconformity score as
\[
S(x, y) = 1 - f_y(x),
\]
where \( f_y(x) = \mathbb{P}( y \mid x) \) is the estimated probability assigned by the trained model to class \( y \) given \( x \).

Given a new testing input \( x_{test} \) from testing set $I_{3}$ , the prediction set is constructed as
\begin{align*}
C(x_{test}) &= \left\{ y \in \{1, 2, \dots, K\} : f_y(x_{test}) \geq 1-Q \right\}\\ &\text{or}= \left\{ y \in \{1, 2, \dots, K\} : 1-f_y(x_{test}) \leq Q \right\},
\end{align*}
where \( Q \) is the \( (1 - \alpha)\left(1 + 1/|I_2| \right) \)-quantile of the nonconformity scores \( \{ S(x_i, y_i) : i \in I_2 \} \).

\begin{remark}
   There are several choices for the nonconformity score, including the HPS score proposed in \citep{Lei03072018} and the APS score introduced in \citep{romano2020classification}:
\begin{align*}
S_{\mathrm{HPS}}(x, y) &= 1 - f_y(x),\\
S_{\mathrm{APS}}(x, y) &= \sum_{y' \in [K]} \hat{\pi}_{y'}(x) \cdot \mathbf{1}_{\{f_{y'}(x) > f_{y}(x)\}} + f_{y}(x) \cdot u,
\end{align*}
where \( u \sim \mathcal{U}(0,1) \) is drawn from the uniform distribution on the unit interval. We focus on the HPS nonconformity score because it allows a more direct analysis of how adversarial attacks affect prediction accuracy.
\end{remark}

To evaluate Split CP, we report validity and efficiency. Validity refers to the proportion of test instances whose true label is contained in the conformal prediction set and should be near the nominal level $1-\alpha$; we summarize it via empirical coverage on the test set. Efficiency is quantified by the average prediction-set size.

Since Split CP is a post-training method, we need to train a model $f$ using adversarially attacked testing data. On the other hand, to ensure the adversarial robustness of the model, adversarial training is a commonly used approach. Before presenting the main theoretical results, we further introduce adversarial training below: 

\subsection{Adversarial training}
Adversarial training is formulated as follows \citep{xing2021generalization}: let \( \ell \) denote the loss function and let \( f_{\theta} : \mathbb{R}^d \to \mathbb{R}^K \) be the model with parameters \( \theta \). The (population) adversarial loss is defined as
\[
R_f(\theta, \epsilon) := \mathbb{E} \left[ l\left(f_\theta\left[x + A_\epsilon(f_\theta, x, y)\right], y \right) \right],
\]
where \( A_\epsilon \) is an attack of strength \( \epsilon > 0 \) and is designed to worsen the loss in the following way:
\begin{align*}
A_\epsilon(f_\theta, x, y) := \arg\max_{z \in \mathcal{R}(0, \epsilon)} \left\{ l(f_\theta(x + z), y) \right\}.
\label{eq:adv_attack}
\end{align*}

In the above, \( z \) is constrained to the perturbation set \( \mathcal{R}(0,\epsilon) \); throughout this paper we focus on \( \ell_\infty \) attacks, i.e., the \( \ell_\infty \)-ball of radius \( \epsilon \) centered at the origin. We use the \( \ell_\infty \) attack for both adversarial training, calibration-time and test-time attacks in experiments, and our proofs are carried out under \( \ell_2 \) attack assumption. We follow the general adversarial-training framework of \citep{xing2021generalization}:

\begin{algorithm}[!htbp]
\footnotesize
\caption{A General Form of Adversarial Training}\label{alg:AT}
\begin{algorithmic}[1]
\State \textbf{Input:} data $(x_i,y_i)_{i=1}^n$, attack bound $\epsilon$, steps $T$, init $\theta^{(0)}$, step size $\eta$.
\For{$t=1$ \textbf{to} $T$}
  \For{$i=1$ \textbf{to} $n$}
    \State $\tilde{x}_i^{(t-1)} \gets x_i + A_{\epsilon}(f_{\theta^{(t-1)}}, x_i, y_i)$
  \EndFor
  \State $\theta^{(t)} \gets \theta^{(t-1)} - \eta \nabla_{\theta}\,\frac{1}{n}\!\sum_{i=1}^{n}
         \ell\!\big(f_{\theta^{(t-1)}}(\tilde{x}_i^{(t-1)}),\,y_i\big)$
\EndFor
\State \textbf{Output:} $\theta^{(T)}$
\end{algorithmic}
\end{algorithm}

Briefly speaking, in Algorithm \ref{alg:AT}, in each iteration, we first calculate the attack of each sample given the current model, and then update the model's parameters based on the loss value calculated from the attacked samples.

We use \(\ell_\infty\)-FGSM for both test-time attacks and adversarial training because it matches our \(\ell_\infty\) experimental threat model and requires only a single gradient step, enabling scalable, reproducible runs across many \(\epsilon\) values and datasets. Its simplicity keeps training and evaluation aligned, so observed effects are attributable to \(\epsilon_{\mathrm{cal}}\) rather than attack optimization.

\textbf{Norm choice.} We present the theory under \(\ell_{2}\)-bounded perturbations for algebraic clarity. However, Split CP is attack-agnostic for any norm-bounded threat model, so the guarantees do not depend on the specific attack or norm. In experiments we adopt \(\ell_{\infty}\) attacks because, at comparable budgets, \(\ell_{2}\) perturbations leave clean models with high accuracy, revealing little about robustness. \(\ell_{\infty}\) attacks more reliably degrade performance across datasets, thereby stressing the system and making the effects of \(\epsilon_{\mathrm{cal}}\) measurable. Thus, using \(\ell_{2}\) in analysis and \(\ell_{\infty}\) in experiments is consistent with the theory and better suited to our empirical goals.

\subsection{The Whole Pipeline}
After introducing Split CP and adversarial training, we present the whole pipeline as follows:

\begin{itemize}
    \item Training stage: Due to potential corruptions in the testing stage, we use adversarial training with attack strength $\epsilon_{train}$ on the training dataset $I_1$ with size $n_1$ to train the regression/classification model.
    \item Calibration: We inject an adversarial attack in each sample with attack strength $\epsilon_{cal}$ for the calibration data $I_2$  and construct the Split CP given the prediction accuracy guarantee $1-\alpha$.
    \item Testing Stage: The testing sample may suffer from attacks with unknown attack strength $\epsilon_{test}$. Given the input testing sample in the testing dataset $I_3$, we perform Split CP and obtain the prediction set.
\end{itemize}
Our target in this paper is to analyze the impact of adversarial training and adversarial attack towards the Split CP procedure. In particular, we will consider whether the calibration set $I_2$ contains adversarial attack or not, and whether we use clean training or adversarial training in the training stage.

\section{Main Results} 

In this section, we present our main theoretical results on the impact of adversarial attacks to the calibration set within Split CP. We analyze how such attacks influence the validity and efficiency of prediction sets when prediction sets are evaluated on adversarially perturbed test inputs in Theorem \ref{thm:theorem1} and Theorem \ref{thm:theorem2}. Furthermore, we provide a formal proposition that offers a theoretical justification for the observed reduction in prediction set size under adversarial training in training stage in Theorem \ref{thm:theorem3}.

\subsection{Validity of Split CP at the post-training stage}

In the following, we first present a previous result from \citep{oliveira2024split}. The following Proposition \ref{prop:split_bound} provides an error bound on the validity of the split CP with the accuracy of the target prediction coverage $1-\alpha$. Unlike common literature in which only one side of the error bound is presented, \citep{oliveira2024split} provides both the upper bound and the lower bound. This is essential to help us further construct the pair of upper and lower bounds for Split CP generated by adversarially attacked calibration set using adversarial testing data.

To introduce the result of Proposition \ref{prop:split_bound}, we first present some assumptions:

\begin{assumption}\label{assumption}
    There exist constants \(e_{\text{cal}} \in (0, 1)\), \(d_{\text{cal}} \in (0, 1)\), and \(e_{\text{train}} \in (0, 1)\) such that the following holds:

   \noindent \textbf{Calibration Concentration:}
   \begin{eqnarray*}
       &&P\left[\left|\frac{1}{n_{2}}\sum_{(x_{i},y_{i}) \in I_{2}}\mathbf{1}\!\{f_{y_i}(x_i) \le Q\}- P_{q,\text{train}}\right|\le e_{\text{cal}}\right]\\
       &&\ge 1 - d_\text{cal},
   \end{eqnarray*}
    where $(x_{*},y_{*})$ is any other data point independent of training, calibration, and testing data, \( P_{q,\text{train}} = P[f_{y_*}(x_*) \le q_{\text{train}} \mid (x_j, y_j)_{j \in |I_{1}|}] \). The mapping \(q: (X \times Y)^{I_{1}} \to \mathbb{R} \) is any measurable function, and  \(q_{train} = q((x_{i},y_{i}),i\in I_{1})\).
    
\noindent\textbf{Test-Time Stability:}
    \begin{align*}
    &\left|
        P\!\left[f_{x_k}(y_k) \le q_{\text{train}}\right]
        -
        P\!\left[f_{x_*}(y_*) \le q_{\text{train}}\right]
    \right|
    \le e_{\text{train}},\\
    &\text{for } (x_{k},y_{k}) \in I_{3}.
    \end{align*}

\end{assumption}

The conditions in Assumption~\ref{assumption}, following \citep{oliveira2024split}, introduce quantities used to control the error terms in Proposition~\ref{prop:split_bound}. In particular, $d_{cal}$ and $e_{cal}$ are chosen to bound the discrepancy between the empirical CDFs of the nonconformity scores computed on the training and calibration samples, while $e_{train}$ enforces that each test point is independent of the training data.

The following is the error bound statement in \citep{oliveira2024split}:

\begin{proposition}\label{prop:split_bound}
Under Assumption \ref{assumption}, if the sample data $\{(x_{i},y_{i})\}$ for $i\in 1,2,\cdots,n+1$ 
are exchangeable, then the prediction set $C(\cdot)$ given by Split CP satisfies:
\begin{align*}
    |\mathrm{P}(Y_{n+1}\in C(X_{n+1}))-1-\alpha|\le e_{\text{train}}+d_{\text{cal}}+e_{\text{cal}}.
\end{align*}  
\end{proposition}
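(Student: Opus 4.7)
The plan is to bound the coverage probability by expanding around the nominal level $1-\alpha$ via a triangle inequality, with each of the three constants $e_{\text{train}}$, $d_{\text{cal}}$, $e_{\text{cal}}$ supplied by Assumption~\ref{assumption} absorbing one source of error. The first step is to rewrite the coverage event in nonconformity form,
\[
\{Y_{n+1}\in C(X_{n+1})\}=\{1-f_{Y_{n+1}}(X_{n+1})\le Q\},
\]
and introduce two scaffolds: the training-based reference threshold $q_{\text{train}}$ and an auxiliary independent pair $(x_*,y_*)$ that together bridge the test law and the calibration empirical measure.

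The argument then proceeds in three transport steps. Step~(i): move the test pair $(X_{n+1},Y_{n+1})$ to the independent reference $(x_*,y_*)$ evaluated at the threshold $q_{\text{train}}$; this swap is controlled pointwise by the Test-Time Stability clause and costs at most $e_{\text{train}}$. Step~(ii): move the conditional population probability $P_{q,\text{train}}=P(f_{y_*}(x_*)\le q_{\text{train}}\mid I_1)$ to the calibration empirical fraction $\frac{1}{n_2}\sum_{i\in I_2}\mathbf{1}\{f_{y_i}(x_i)\le Q\}$; Calibration Concentration gives a deviation of at most $e_{\text{cal}}$ on an event of probability at least $1-d_{\text{cal}}$, and integrating over the failure event (on which the difference of indicators is trivially bounded by $1$) contributes the additive $d_{\text{cal}}$ term. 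Step~(iii): identify the calibration empirical fraction with $1-\alpha$ by the construction of $Q$ as the $(1-\alpha)(1+1/n_2)$-quantile of the calibration nonconformity scores, which is the standard Split CP identity under exchangeability of calibration and test. Chaining the three bounds via the triangle inequality yields the stated $|P(Y_{n+1}\in C(X_{n+1}))-(1-\alpha)|\le e_{\text{train}}+d_{\text{cal}}+e_{\text{cal}}$.

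The main obstacle will be keeping the two thresholds $Q$ and $q_{\text{train}}$ aligned: the coverage event itself uses the data-dependent empirical quantile $Q$, whereas Test-Time Stability is phrased at the fixed reference $q_{\text{train}}$. In the \citep{oliveira2024split} framework, this is reconciled by choosing the measurable map $q$ to be the population quantile that $Q$ estimates, so that Calibration Concentration plays the role of a DKW/Hoeffding-type inequality linking the two thresholds at the level of CDF values; the swap between the $Q$-threshold and the $q_{\text{train}}$-threshold then goes through in Step~(ii) at cost $e_{\text{cal}}+d_{\text{cal}}$. A secondary, more clerical difficulty is tracking which $\sigma$-algebra each probability is conditioned on (training only versus training plus calibration) and absorbing the $1/n_2$ quantile correction; both are handled by taking conditional expectations in the right order without disturbing the additive structure of the final bound.
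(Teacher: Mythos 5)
The paper does not actually prove Proposition~\ref{prop:split_bound}; it is imported verbatim from \citep{oliveira2024split}, and your three-step transport argument (test-time stability contributing $e_{\text{train}}$, calibration concentration contributing $e_{\text{cal}}$ plus the $d_{\text{cal}}$ failure-event mass, and the empirical-quantile identity closing the loop at $1-\alpha$) is exactly the decomposition used in that source, including the correct handling of the $Q$ versus $q_{\text{train}}$ alignment. The only loose end is that your Step~(iii) identification of the calibration empirical fraction with $1-\alpha$ really incurs an additional $O(1/n_2)$ discretization error from the $(1-\alpha)(1+1/n_2)$ quantile definition, which you acknowledge but which the stated bound (following the source) silently absorbs; this is a cosmetic rather than substantive discrepancy.
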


Given the two-sided error bound result in Proposition \ref{prop:split_bound}, Theorem \ref{thm:theorem1} below analyzes the gap between the prediction accuracy of Split CP and the target accuracy $1-\alpha$. The theorem considers the case where there is a test-time attack with strength $\epsilon_{test}$ and the calibration attack strength is taken as $\epsilon_{cal}$.

\begin{theorem}\label{thm:theorem1}
Assume $f$  the predictive probability distribution of class true label $y$ given observed input $x$ is smooth for a trained neural network. In the case of a neural network, this corresponds to the probability assigned to each class after applying the softmax transformation to the model's output scores. Denote function $g(f_{y}(x))$ as the density of $f_{y}(x)$. Suppose $g$ is second-order differentiable w.r.t. $f_{y}(x)$. Let $C_{\epsilon_{cal}}(x_{test} + \epsilon_{test} A_{test})$ be the prediction set produced by the Split CP algorithm using calibration set with attack strength $\epsilon_{cal}$ given the adversarially attacked test input $x_{test} + \epsilon_{test} A_{test}$. Then for any fixed $\epsilon_{cal}=o(1)$, the coverage of $C_{\epsilon_{cal}}(x_{test} + \epsilon_{test} A_{test}))$ satisfies:
\begin{eqnarray*}
    &&|\mathbb{P}(y_{test}\in C_{\epsilon_{cal}}(x_{test}+\epsilon_{test}A_{test}))-(1-\alpha)|\le\\
    &&e_{train}+d_{cal}+e_{cal}\\
    &&+\left((2\epsilon_{test}-\epsilon_{cal})||\nabla f_{x_{test}}||-\epsilon_{cal}\cdot c \right)\\
    &&g(Q_{1-\alpha}(f_{y_{cal}}(x_{cal})))+o\left(\epsilon_{cal}^{2}+\epsilon_{test}^{2}\right).
\end{eqnarray*}
where we rewrite the attack as $\epsilon_{test}A_{test}$ and 
\begin{eqnarray*}
    A_{test}&\propto&\frac{\partial L(f(x_{test}),y_{test})}{\partial x_{test}}\\
    &=&\frac{\partial L(f(x_{test}),y_{test})}{\partial f_{y_{test}}(x_{test})}\frac{\partial f_{y_{test}}(x_{test})}{\partial x_{test}}
\end{eqnarray*}
with $||A_{test}|| =1$.
\end{theorem}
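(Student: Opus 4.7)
My plan is to reduce the bound to two superposed error budgets: the finite-sample / non-exchangeability budget $e_{train} + d_{cal} + e_{cal}$ supplied by Proposition~\ref{prop:split_bound}, plus a perturbation budget coming from linearizing the score under the two attack strengths. I would first rewrite the target coverage as
\[
\mathbb{P}\bigl(f_{y_{test}}(x_{test} + \epsilon_{test} A_{test}) \ge 1 - Q_{\epsilon_{cal}}\bigr),
\]
where $Q_{\epsilon_{cal}}$ is the empirical $(1-\alpha)(1 + 1/n_{2})$-quantile of the attacked calibration scores $\{1-f_{y_i}(x_i+\epsilon_{cal}A_i):i\in I_2\}$. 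Applying Proposition~\ref{prop:split_bound} to the attacked calibration distribution (which is still exchangeable within itself) absorbs $e_{train}+d_{cal}+e_{cal}$ and lets me replace $Q_{\epsilon_{cal}}$ by its population analogue $Q^{*}_{\epsilon_{cal}} := Q_{1-\alpha}\bigl(1-f_{y_{cal}}(x_{cal} + \epsilon_{cal} A_{cal})\bigr)$.

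Next I would Taylor-expand the score under each attack. Because the test attack is the normalized gradient of the cross-entropy loss, which depends on $x$ only through $f_{y}(x)$, we have $A_{test}\propto -\nabla_{x} f_{y_{test}}(x_{test})/\|\nabla f_{y_{test}}(x_{test})\|$, so
\[
f_{y_{test}}(x_{test}+\epsilon_{test}A_{test}) = f_{y_{test}}(x_{test}) - \epsilon_{test}\,\|\nabla f_{y_{test}}(x_{test})\| + O(\epsilon_{test}^{2}),
\]
and an identical expansion holds for the calibration attack. Plugging these into the defining quantile equation for $Q^{*}_{\epsilon_{cal}}$ and using the assumed smoothness of $g$ in a first-order expansion of the quantile function yields
\[
Q^{*}_{\epsilon_{cal}} = Q_{1-\alpha}\bigl(1-f_{y_{cal}}(x_{cal})\bigr) + \epsilon_{cal}\,c + O(\epsilon_{cal}^{2}),
\]
with $c$ the conditional-expectation constant from the theorem statement. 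A second Taylor expansion of the CDF of $f_{y_{test}}(x_{test}+\epsilon_{test}A_{test})$ around the clean quantile, combined with the displayed expansion above, produces a linear correction whose coefficient collects the two attack strengths against $\|\nabla f_{x_{test}}\|$ and $c$, multiplied by $g$ evaluated at the clean quantile; careful tracking of signs and of cross-terms between the quantile shift (controlled by $c$) and the test-point shift (controlled by $\|\nabla f_{x_{test}}\|$) is what recovers the announced $\bigl[(2\epsilon_{test}-\epsilon_{cal})\|\nabla f_{x_{test}}\| - \epsilon_{cal}\,c\bigr]\,g\bigl(Q_{1-\alpha}(f_{y_{cal}}(x_{cal}))\bigr)$ factor.

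The main technical obstacle will be the double linearization: I need to expand both the calibration quantile (implicitly defined through a CDF of a perturbed random variable) and the coverage probability (another perturbed CDF evaluated at that quantile) to first order consistently, while keeping the remainder of size $o(\epsilon_{cal}^{2}+\epsilon_{test}^{2})$. Care is required because the gradient norm $\|\nabla f\|$ is itself random and correlated with $f_{y}(x)$, so the first-order coefficient at the calibration step emerges as $\mathbb{E}[\|\nabla f\|\,|\,f_{y}=Q_{1-\alpha}]=c$ rather than an unconditional expectation, whereas at the test step the local gradient $\|\nabla f_{x_{test}}\|$ survives inside the bound; distinguishing these two roles is the delicate bookkeeping step. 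A secondary check is showing that the Proposition~\ref{prop:split_bound} error budget, stated for exchangeable data, still applies once every calibration and test point has been moved by its own FGSM perturbation, since the joint law after attack remains exchangeable across the calibration fold.
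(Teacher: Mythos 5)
Your proposal follows essentially the same route as the paper's proof: you invoke Proposition~\ref{prop:split_bound} for the $e_{train}+d_{cal}+e_{cal}$ budget, Taylor-expand the HPS scores under the calibration and test attacks using $A_{test}^{\top}\nabla f_{x_{test}}=-\|\nabla f_{x_{test}}\|$, shift the calibration quantile by $\epsilon_{cal}\,c$ via exactly the conditional-expectation argument formalized in the paper's Lemma~1, and convert the residual score gap into a probability band using the density $g$ at the quantile. The decomposition, the key lemma, and the bookkeeping distinction between $c$ (conditional on the quantile event) and the test-point gradient norm all coincide with the paper's argument.
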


The proof of Theorem~\ref{thm:theorem1} is in Appendix~\ref{sec:proof-thm1}. A brief intuition is as follows. Under mild regularity conditions—specifically, assuming the smoothness of the probability density function of $g(f_{y}(x))$ and the smoothness of $G(f_{y}(x))$, the prediction-accuracy gap between the prediction set generated from an attacked calibration set and the desired target accuracy is an increasing function of $\epsilon_{cal}$ when the $\epsilon_{test}$ range is held fixed. In particular, with $\epsilon_{test}$ fixed, larger $\epsilon_{cal}$ induces larger prediction sets and hence smaller accuracy shortfall. It is easy to see that when $\epsilon_{cal} = \epsilon_{test}$, the problem reduces to the exchangeable scenario treated in Proposition~\ref{prop:split_bound}.

Beyond establishing this dependence for the calibration split \(I_2\), we also study how to attain a tolerance band around the target when $\epsilon_{test}$ is unknown. Since exact attainment of $1-\alpha$ is impossible without knowing $\epsilon_{test}$, we consider a practical band (e.g., $87\%–93\%$ for a $90\%$ target). Theorem~\ref{thm:theorem2} connects $\epsilon_{cal}$, the tolerance width, the target level, and $\epsilon_{test}$, yielding conditions under which coverage falls within the desired band over a range of test-time attacks. This provides a principled rule for selecting the smallest $\epsilon_{cal}$ that achieves the tolerance across the anticipated $\epsilon_{test}$ range.

\begin{theorem}\label{thm:theorem2}
We assume that $\epsilon_{cal}$ and $\epsilon_{test}$ are $o(1)$. Let $\beta$ be the fixed tolerance for acceptable deviation in the accuracy of the test data. Under the same conditions as in Theorem \ref{thm:theorem1}, the Split CP will give an accuracy guarantee $P(y_{test}\in C(x_{test} + \epsilon_{test} A_{test})) \in [1-\alpha-\beta,1-\alpha+\beta]$ with the following possible test-time attack strength $\epsilon_{test}$:
\begin{align*}
    \epsilon_{test} &> \frac{-\beta - (e_{train}+d_{cal}+e_{cal})}{2||\nabla f_{x_{test}}||g(Q_{1-\alpha}(f_{y_{cal}}(x_{cal}))}\\
    &+\frac{(c+||\nabla f_{x_{test}}||)\epsilon_{cal}}{2||\nabla f_{x_{test}}||} \\[4pt]
    \epsilon_{test} &< \frac{\beta - (e_{train}+d_{cal}+e_{cal})}{2||\nabla f_{x_{test}}||g(Q_{1-\alpha}(f_{y_{cal}}(x_{cal}))}\\
    &+\frac{(c+||\nabla f_{x_{test}}||)\epsilon_{cal}}{2||\nabla f_{x_{test}}||}\\ 
    \intertext{The length of the tolerance interval of $\epsilon_{test}$ is}
    &    \frac{2\beta}{2||\nabla f_{x_{test}}||g(Q_{1-\alpha}(f_{y_{cal}}(x_{cal}))}.
\end{align*}
\end{theorem}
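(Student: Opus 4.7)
The plan is to derive Theorem~\ref{thm:theorem2} as a direct algebraic corollary of Theorem~\ref{thm:theorem1}. I would first read Theorem~\ref{thm:theorem1} as a signed first-order expansion of the coverage error rather than a bare magnitude bound: up to the $o(\epsilon_{cal}^{2}+\epsilon_{test}^{2})$ remainder, the coverage error $\mathbb{P}(y_{test}\in C_{\epsilon_{cal}}(x_{test}+\epsilon_{test}A_{test}))-(1-\alpha)$ is sandwiched by $\pm\bigl[(e_{train}+d_{cal}+e_{cal})+((2\epsilon_{test}-\epsilon_{cal})\|\nabla f_{x_{test}}\|-\epsilon_{cal}c)\,g(Q_{1-\alpha}(f_{y_{cal}}(x_{cal})))\bigr]$, in which the $(e_{train}+d_{cal}+e_{cal})$ piece bounds finite-sample fluctuations that can tilt coverage in either direction and the bracketed linear piece is the systematic shift induced by the mismatch between $\epsilon_{cal}$ and $\epsilon_{test}$.

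I would then convert the tolerance requirement $|\mathbb{P}(y_{test}\in C(\cdot))-(1-\alpha)|\le \beta$ into the pair of signed inequalities
\begin{align*}
-\beta \;\le\; (e_{train}+d_{cal}+e_{cal}) + \bigl[(2\epsilon_{test}-\epsilon_{cal})\|\nabla f_{x_{test}}\|-\epsilon_{cal}c\bigr]g(Q_{1-\alpha}(f_{y_{cal}}(x_{cal}))) \;\le\; \beta,
\end{align*}
dropping the $o(\epsilon_{cal}^{2}+\epsilon_{test}^{2})$ remainder under the $\epsilon_{cal},\epsilon_{test}=o(1)$ assumption. Each inequality is linear in $\epsilon_{test}$ with the same positive slope $2\|\nabla f_{x_{test}}\|g(Q_{1-\alpha}(f_{y_{cal}}(x_{cal})))$, so solving in closed form yields the two thresholds claimed in the theorem: the right-hand inequality gives $\epsilon_{test}<\frac{\beta-(e_{train}+d_{cal}+e_{cal})}{2\|\nabla f_{x_{test}}\|g(Q_{1-\alpha}(f_{y_{cal}}(x_{cal})))}+\frac{(c+\|\nabla f_{x_{test}}\|)\epsilon_{cal}}{2\|\nabla f_{x_{test}}\|}$, and the left-hand inequality gives the companion lower threshold with numerator $-\beta-(e_{train}+d_{cal}+e_{cal})$. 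Subtracting the two thresholds makes the $(e_{train}+d_{cal}+e_{cal})$ offset and the $\epsilon_{cal}$-dependent translation cancel, leaving the interval length $\frac{2\beta}{2\|\nabla f_{x_{test}}\|g(Q_{1-\alpha}(f_{y_{cal}}(x_{cal})))}$ as stated.

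The only non-routine point is the passage from Theorem~\ref{thm:theorem1}'s absolute-value statement to the signed sandwich used above. This is justified because the bracketed linear term in Theorem~\ref{thm:theorem1} arises from a first-order Taylor expansion of the coverage probability in $(\epsilon_{cal},\epsilon_{test})$ around $(0,0)$, so it is a signed approximation of the coverage error and not merely a bound on its magnitude; once this is in hand, what remains is linear-algebraic rearrangement. The higher-order $o(\epsilon_{cal}^{2}+\epsilon_{test}^{2})$ remainder is dominated by $\beta$ under the $o(1)$ hypothesis on both attack strengths, so it does not perturb the leading-order thresholds and can be safely absorbed into the tolerance band.
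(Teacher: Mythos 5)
Your proposal matches the paper's own proof essentially step for step: both start from the Theorem~\ref{thm:theorem1} expansion, impose $-\beta < (e_{train}+d_{cal}+e_{cal}) + \bigl[(2\epsilon_{test}-\epsilon_{cal})\|\nabla f_{x_{test}}\|-\epsilon_{cal}c\bigr]g(Q_{1-\alpha}(f_{y_{cal}}(x_{cal})))<\beta$, solve the linear inequalities for $\epsilon_{test}$, and subtract the thresholds to get the interval length. Your explicit justification for reading Theorem~\ref{thm:theorem1}'s bound as a signed first-order approximation of the coverage error (rather than a pure magnitude bound) is a point the paper glosses over but relies on implicitly, so your write-up is, if anything, slightly more careful.
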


To illustrate Theorem \ref{thm:theorem2}, changing the calibration strength $\epsilon_{cal}$ shifts the entire feasible interval of test-time attacks at which Split CP meets the tolerance band $[\,1-\alpha-\beta,\,1-\alpha+\beta\,]$. Increasing $\epsilon_{cal}$ moves the interval toward larger attack levels; decreasing it moves the interval toward smaller ones. Crucially, this shift does not change the interval’s length, which is governed by the tolerance $ \beta $ and the model’s local sensitivity.

After explaining the impact of adversarial attack in Split CP, we further present the analysis of how adversarial training in the training stage affects the final Split CP result in the following section.

\subsection{Impact of adversarial training at the training stage}

To investigate the impact of adversarial training on the prediction set size given by the Split CP, we first recall a relevant result from the literature. The following Example \ref{ex:example1}, summarizing Theorem~4.3 and  Theorem~4.4 in \citep{li2024adversarial} establishes that, in the presence of test-time adversarial perturbations, adversarially trained models are more robust than those trained on clean data. This guarantee motivates our subsequent analysis of how such robustness translates into improved prediction-set efficiency under Split CP.

\begin{example}\label{ex:example1}
For sufficiently large $d$, consider training a two-layer neural network of the form 
\begin{align*}
F(X) &= \big(F_1(X),F_2(X),\ldots,F_k(X)\big), \\
F_i(X) :&= \sum_{r \in [m]} \sum_{p \in [P]} \widetilde{\mathrm{ReLU}}\!\big(\langle w_{i,r}, x_p \rangle\big),
\end{align*}
where $X=(x_p)_{p \in [P]} \in (\mathbb{R}^d)^P$, $w_{i,r} \in \mathbb{R}^d$ are trainable weights, and $m=\mathrm{polylog}(d)$ denotes the network width. 
Suppose the model is initialized randomly and trained for 
$T = \Theta(\mathrm{poly}(d)/\eta)$ iterations on a sampled training dataset $\mathcal{Z}$ 
using a learning rate $\eta$. Under these conditions, the following properties hold with high probability:
\begin{enumerate}
    \item \textbf{Standard Training:} The model at T iteration $F^{(T)}$  converges to a non-robust global minimum. 
    Let $F_{i}^{T}$ be the $F_{i}(X)$ at T-th iteration. In particular, there exists a perturbation $\Delta(X,y)$ independent of $F^{(T)}$ such that the robust test accuracy is poor:
    \begin{align*}
        \mathbb{P}_{(X,y)\sim \mathcal{D}} 
        &\Big[ \arg\max_{i \in [k]} F^{(T)}_{i}(X+\Delta(X,y)) \neq y \Big] \\[2pt]
        &= 1 - o(1).
    \end{align*}
    \item \textbf{Adversarial Training:} The model $F^{(T)}$ converges to a robust global minimum. 
    Consequently, for perturbations $\|\Delta\|_{\infty} \le \epsilon$, the robust test accuracy is high:
    \begin{align*}
        \mathbb{P}_{(X,y)\sim \mathcal{D}} 
        \Big[ \exists \Delta \in (\mathbb{R}^d)^P, \ \|\Delta\|_{\infty} \le \epsilon 
        \ \text{s.t.}\ \\[-1pt]
        \arg\max_{i \in [k]} F^{(T)}_{i}(X+\Delta) \neq y \Big] 
        &= o(1).
    \end{align*}
\end{enumerate}
\end{example}

Example~\ref{ex:example1} shows that, under specific conditions, adversarial training increases the prediction accuracy in the presence of $\ell_\infty $ test-time attacks. We use this result as intuition and supporting evidence for the analysis in Theorem~\ref{thm:theorem3}.

\begin{theorem}\label{thm:theorem3}
Assume that the predictive distribution $f$ assigns probability mass uniformly across all incorrect classes, while allocating the remaining probability to the true class.  Let $P_{y,\text{true}}$ be the predictive probability for the true label.
Split CP gives the prediction set by $C(x_{n+1}) = \{ y \in \{1, 2, \dots, K\} : f_y(x_{n+1}) \geq Q\}$ as mentioned in Section \ref{sec:split-cp}. The event stands for the number of classes contained in the prediction set given by the cardinality of $C(x_{n+1})$ given by:
\begin{align*}
    |C(x_{test})| = \sum_{i=1}^{k}\mathbbm{1}\left(f_y(x_{test}) \geq 1-Q\right).
\end{align*}
The expectation of the prediction set size can be bounded by:
\begin{align*}
    &&\mathbb{E}(|C(x_{test})|)\le 1-\alpha \pm h(\epsilon_{cal},\epsilon_{test}) \\
    &&+(K-1)\left(1-\frac{1-Q}{1-P_{y,true}}\right)^{K-2}.
\end{align*}

As in the supporting evidence in Example ~\ref{ex:example1}, when adversarial training increases the prediction accuracy in the presence of $\ell_\infty $ test-time attacks, the following holds
\[
\frac{1-Q^{\mathrm{adv}}}{\,1-P_{y,\mathrm{true}}^{\mathrm{adv}}\,}
\;<\;
\frac{1-Q^{\mathrm{clean}}}{\,1-P_{y,\mathrm{true}}^{\mathrm{clean}}\,},
\]
where \(Q^{\mathrm{adv}}\) and \(Q^{\mathrm{clean}}\) are the \((1-\alpha)\)-quantiles of the HPS score on the calibration set for adversarially trained and cleanly trained models, respectively, and \(P_{y,\mathrm{true}}^{\mathrm{adv}}\) and \(P_{y,\mathrm{true}}^{\mathrm{clean}}\) denote the corresponding true-label probabilities. Since the upper bound on the expected set size,
\((K-1)\!\left(1-\frac{1-Q}{1-P_{y,\mathrm{true}}}\right)^{K-2}\),
is increasing in \(\frac{1-Q}{1-P_{y,\mathrm{true}}}\), the adversarially trained model yields a smaller expected prediction-set size than the cleanly trained model.
\end{theorem}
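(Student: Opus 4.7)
The strategy is to decompose the expected cardinality of the Split CP prediction set into the contribution from the true label and from the $K-1$ incorrect labels, bound the two pieces separately, and then invoke Example~\ref{ex:example1} to compare adversarial versus clean training. Writing
\[
|C(x_{test})| = \mathbbm{1}\{f_{y_{true}}(x_{test}) \geq 1-Q\} + \sum_{i \neq y_{true}} \mathbbm{1}\{f_i(x_{test}) \geq 1-Q\},
\]
taking expectations, and using symmetry across the incorrect classes yields
\[
\mathbb{E}|C(x_{test})| = \mathbb{P}(y_{true}\in C(x_{test})) + (K-1)\,\mathbb{P}(f_i(x_{test}) \geq 1-Q)
\]
for any fixed $i \neq y_{true}$, so it suffices to bound each term on the right.

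\textbf{True-label term.} The quantity $\mathbb{P}(y_{true}\in C(x_{test}))$ is exactly the coverage probability analyzed in Theorem~\ref{thm:theorem1}, so I would quote that theorem directly and obtain a bound of the form $1-\alpha \pm h(\epsilon_{cal},\epsilon_{test})$, where $h$ absorbs $e_{train}+d_{cal}+e_{cal}$ together with the first-order correction $\big((2\epsilon_{test}-\epsilon_{cal})\|\nabla f_{x_{test}}\|-\epsilon_{cal}c\big)\,g(Q_{1-\alpha}(f_{y_{cal}}(x_{cal})))$ and the $o(\epsilon_{cal}^2+\epsilon_{test}^2)$ remainder. This is the source of the first two summands in the claimed bound.

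\textbf{Incorrect-label term.} For the remaining term, I would read the uniformity hypothesis as the statement that, conditional on $P_{y,\text{true}}$, the vector of incorrect-class probabilities is drawn from a symmetric Dirichlet (equivalently, the uniform distribution) on the scaled simplex $\{(p_1,\ldots,p_{K-1}) : p_i \geq 0,\; \sum_i p_i = 1-P_{y,\text{true}}\}$. A standard volume-of-slices computation then gives the marginal density $\frac{K-2}{(1-P_{y,\text{true}})^{K-2}}(1-P_{y,\text{true}}-x)^{K-3}$ on $[0,1-P_{y,\text{true}}]$, and integrating from $1-Q$ produces the tail probability $\big(1-\tfrac{1-Q}{1-P_{y,\text{true}}}\big)^{K-2}$ whenever $Q \geq P_{y,\text{true}}$ (and zero otherwise). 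Multiplying by $K-1$ yields the claimed $(K-1)\big(1-\tfrac{1-Q}{1-P_{y,\text{true}}}\big)^{K-2}$ contribution, completing the bound.

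\textbf{Comparison and main obstacle.} The robustness conclusion then follows from Example~\ref{ex:example1}: because adversarial training lifts $P_{y,\text{true}}^{\mathrm{adv}}$ above $P_{y,\text{true}}^{\mathrm{clean}}$ under $\ell_\infty$ test-time perturbations, the HPS scores on the calibration set concentrate nearer $0$, lowering the $(1-\alpha)$-quantile $Q^{\mathrm{adv}}$; combining these two effects gives $\frac{1-Q^{\mathrm{adv}}}{1-P_{y,\text{true}}^{\mathrm{adv}}} < \frac{1-Q^{\mathrm{clean}}}{1-P_{y,\text{true}}^{\mathrm{clean}}}$, and monotonicity of $x\mapsto x^{K-2}$ on $[0,1]$ delivers the strictly tighter bound. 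The delicate step is the incorrect-label computation: the phrase ``probability mass uniformly across all incorrect classes'' is ambiguous, since a literal deterministic equal split $(1-P_{y,\text{true}})/(K-1)$ would make the indicator degenerate and the $(K-2)$ exponent would never appear. I therefore expect the main obstacle to be defending the symmetric-Dirichlet interpretation of the assumption, together with justifying that $P_{y,\text{true}}$ can be treated as essentially constant across $x_{test}$ so that the stated bound holds as written rather than as an additional expectation over the test distribution.
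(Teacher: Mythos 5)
Your proposal follows essentially the same route as the paper's proof: the same decomposition into the true-label coverage term (bounded via Theorem~\ref{thm:theorem1}) plus the $K-1$ incorrect-label terms, the same symmetric-Dirichlet reading of the uniformity assumption yielding the $\mathrm{Beta}(1,K-2)$ marginal and the tail probability $\bigl(1-\tfrac{1-Q}{1-P_{y,\text{true}}}\bigr)^{K-2}$, and the same appeal to Example~\ref{ex:example1} for the adversarial-versus-clean comparison. Your explicit volume-of-slices derivation of the Beta tail and your flagged concern about the ambiguity of the uniformity assumption are reasonable elaborations, but they do not constitute a different argument.
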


The proof of Theorem~\ref{thm:theorem3} is provided in Appendix~\ref{sec:proof-thm3}. 
At a high level, the argument separates the prediction set into two components: (i) the contribution from the true label, and (ii) the contribution from the remaining (incorrect) labels.  

The assumption in Theorem \ref{thm:theorem3} mirrors the standard symmetric and uniform label-noise model in multiclass learning. Conditional on being wrong, each incorrect class is equally likely which is widely used for tractable and conservative analyses as discussed in \citep{vanrooyen2015unhinged,oyen2022labelnoise}.

For the true label \(y\), Theorem~\ref{thm:theorem1} bounds the deviation of coverage from the nominal level \(1-\alpha\) by a function of the calibration and test perturbations:
\begin{eqnarray*}
&&h(\epsilon_{\mathrm{cal}},\epsilon_{\mathrm{test}}) = e_{train}+d_{cal}+e_{cal}\\
    &&+\left((2\epsilon_{test}-\epsilon_{cal})||\nabla f_{x_{test}}||-\epsilon_{cal}\cdot c +o(\epsilon_{test}^{2}+\epsilon_{cal}^{2})\right)\\
    &&g(Q_{1-\alpha}(f_{y_{cal}}(x_{cal}))+o\left(\epsilon_{cal}^{2}+\epsilon_{test}^{2}\right).
\end{eqnarray*}

Hence, the expected contribution of the true class to \(|C(x_{test})|\) is \(1-\alpha \pm h(\epsilon_{cal},\epsilon_{test})\). Under the uniform-residual assumption, the expected number of incorrect labels included is at most \((K-1)\!\left(1-\frac{1-Q}{\,1-P_{y,true}}\right)^{K-2}\). Summing these two parts yields the stated expectation for the coverage set size. Intuitively, adversarial training tends to increase \(P_{y,true}\), tightening the coverage deviation and lowering the chance that incorrect labels cross $[K]\setminus y$ producing smaller sets than clean training.

\section{Experiments}
In the experiments, we empirically validate the theoretical results presented in the theorems and propositions. Specifically, we expect that the prediction accuracy of Split CP exhibits a monotonic relationship with respect to the attack strength applied to the calibration set in classification tasks for Theorem \ref{thm:theorem1}. Furthermore, for a fixed calibration-time attack strength, we demonstrate that the prediction set coverage remains within a predictable range over an interval of test-time attack strengths  for Theorem \ref{thm:theorem2}. Finally, we show that adversarial training reduces the expected prediction size by increasing the probability of giving the true prediction for Theorem \ref{thm:theorem3}.

\subsection{Experimental Setup}
\paragraph{Dataset}
CIFAR-10, CIFAR-100 \citep{deng2009imagenet}, TinyImageNet \citep{le2015tiny}, and MNIST \citep{deng2012mnist} are used in the experiments. We sampled 60\% data for training, 20\% for calibration, and the remaining 20\% for testing.

\paragraph{Neural network architecture/model}
We conducted image classification experiments using two distinct architectures: ResNet-50d \citep{he2019bag} and Vision Transformers (ViT) \citep{dosovitskiy2020image}. Both models were trained separately under the same experimental setup, and we compared their performance to assess not only predictive accuracy but also the size of the resulting conformal prediction sets. We chose to include ViT because recent work has demonstrated that transformer-based models are generally more powerful and robust than ResNet-style convolutional networks in image classification tasks under adversarial attack \citep{chen2021vision}. This comparison allows us to evaluate how the choice of model architecture influences the validity and efficiency of prediction sets.

% --- ViT Accuracy (single column) ---
\begin{figure}[H]
    \centering
    \includegraphics[width=0.95\columnwidth]{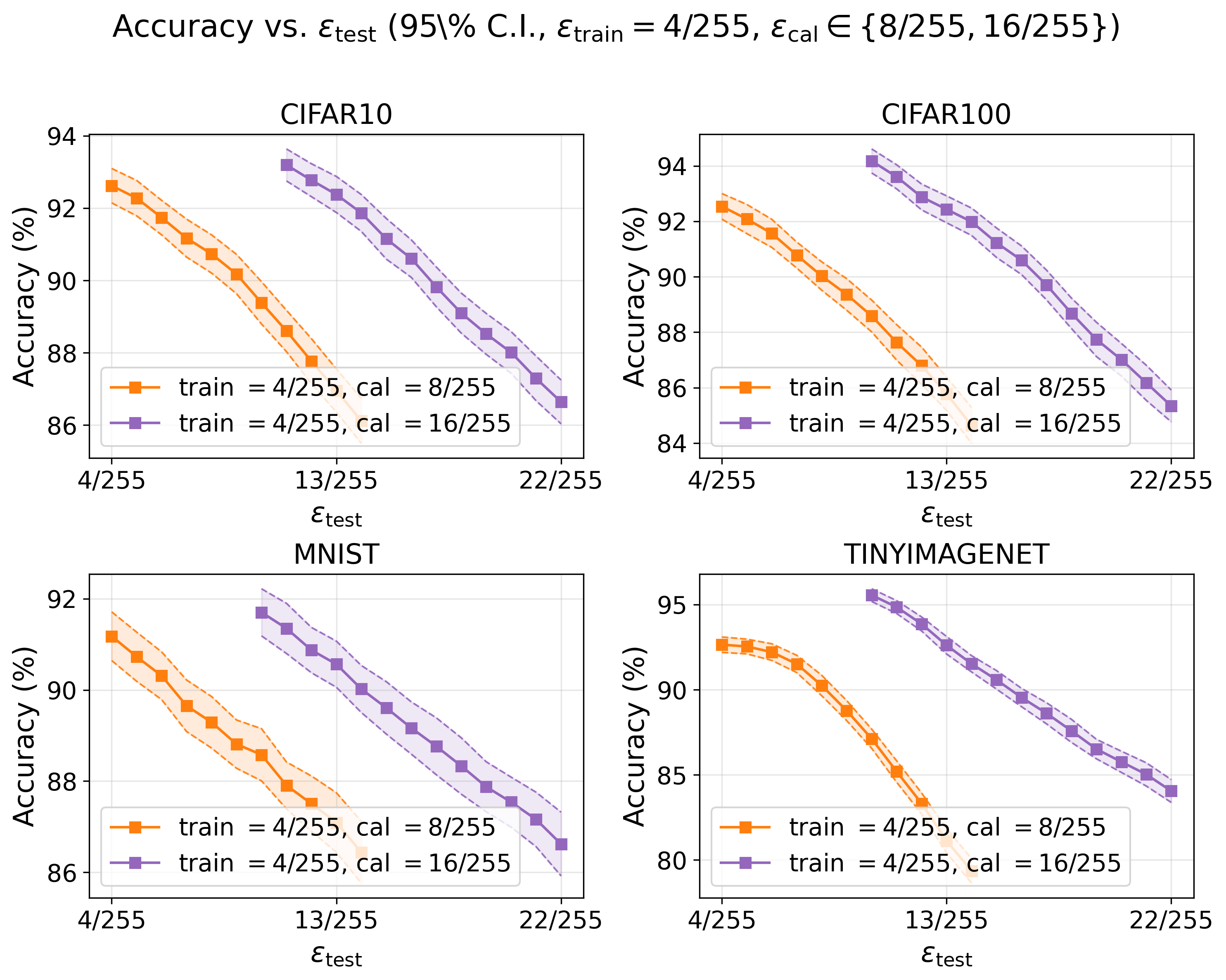}
    \caption{ViT Accuracy}
    \label{fig:vit_accuracy}
\end{figure}

% --- ViT Set Size (single column) ---
\begin{figure}[H]
    \centering
    \includegraphics[width=0.95\columnwidth]{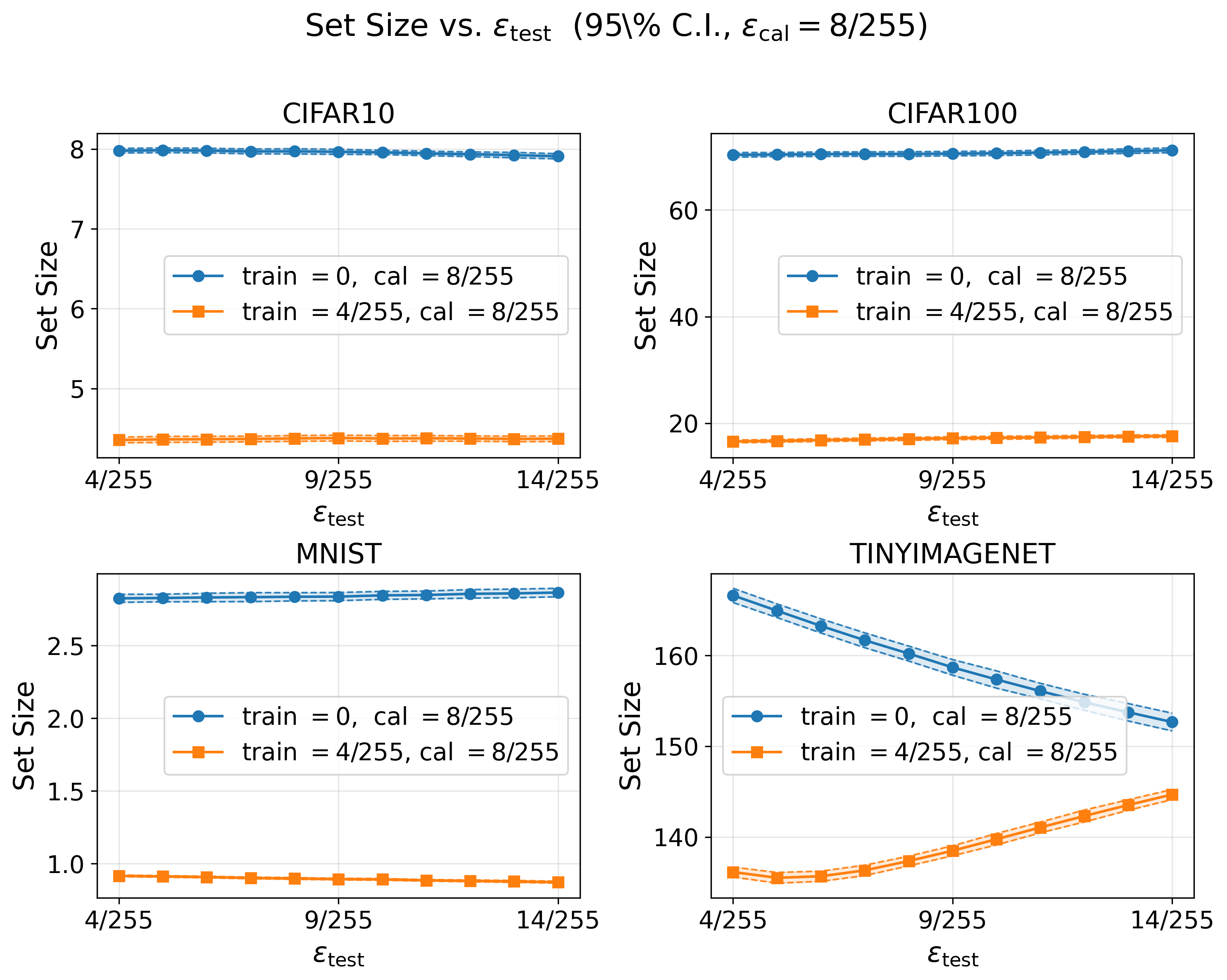}
    \caption{ViT Set Size}
    \label{fig:vit_setsize}
\end{figure}

\paragraph{Training configuration}
For all datasets, we train ResNet50d and ViT to a stable training loss using 40 iterations on a single NVIDIA H200 GPU. Our evaluation focuses on \(\ell_{\infty}\) adversarial attacks with perturbation strengths \(8/255\) and \(16/255\). We do not report \(\ell_{2}\) attacks because the clean (non-adversarially trained) models are comparatively insensitive to \(\ell_{2}\) perturbations, accuracy remains high, thus \(\ell_{2}\) fails to expose meaningful differences in robustness for Split CP. In contrast, \(\ell_{\infty}\) attacks are stronger in our setting and reliably stress the models, providing a more stringent and discriminative robustness assessment for our experiments.

We design a parameter grid with respect to $\epsilon_{train},\epsilon_{cal},\epsilon_{test}$ summarized in Table \ref{tab:experiment-setup-compact} to systematically explore prediction accuracy across five different datasets, targeting a prediction accuracy coverage level of $90\%$.

\begin{table}[htbp]
\centering
\scriptsize
\caption{Adversarial training and testing configurations under different calibration levels.}
\label{tab:experiment-setup-compact}
\begin{tabular}{lc}
\toprule
\textbf{Calibration $\epsilon$} & \textbf{Training / Test $\epsilon$} \\
\midrule
8/255   & Train: \{0, 4, 8, 12, 16\}/255 \newline Test: \{2–14\}/255 \\
16/255  & Train: \{0, 4, 8, 12, 16\}/255 \newline Test: \{10–22\}/255 \\
\bottomrule
\end{tabular}
\end{table}

\paragraph{Evaluation}
We evaluated prediction accuracy and prediction set size across all parameter configurations and datasets. The results from clean training under each level of test-time adversarial attack serve as our baseline for comparison. All models are trained until the training loss reaches a pre-specified target threshold, with the training process typically requiring several days to complete. Besides, while some related works, such as RSCP \citep{gendler2022adversarially}, consider robustness of Split CP, some preliminary trails in the  CIFAR-100 dataset showed that the RSCP method under an $\mathcal{L}_{\infty}$ attack with strength $8/255$ on the test set failed to maintain the desired coverage. Therefore, we focus on comparing the performance of the approach considered in this paper.

\paragraph{Results}
To verify Theorems~\ref{thm:theorem1}–\ref{thm:theorem3}, we run three complementary studies. 
(i) For Theorem~\ref{thm:theorem1}, we assess the Split CP procedure under a calibration-time attack of strength $\epsilon_{cal}$ and confirm that empirical coverage  varies monotonically with $\epsilon_{cal}$. 
(ii) For Theorem~\ref{thm:theorem2}, fixing $\epsilon_{cal}$, we sweep the test-time attack $\epsilon_{test}$ in a neighborhood of $\epsilon_{cal}$ (including $\epsilon_{test} = \epsilon_{cal}$ and check that coverage stays within the prescribed tolerance band around the target level. (iii) For Theorem~\ref{thm:theorem3}, we compare clean training versus adversarial training while holding Split CP fixed, and show that adversarial training yields smaller expected prediction-set sizes at comparable coverage.

We begin with Theorem~\ref{thm:theorem1}: we examine the monotonicity of Split CP’s empirical accuracy as the calibration attack strength \(\epsilon_{\mathrm{cal}}\) varies, holding the \emph{test-time} attack range fixed across datasets. We then turn to Theorem~\ref{thm:theorem2}: fixing \(\epsilon_{\mathrm{cal}}\), we sweep \(\epsilon_{\mathrm{test}}\) to assess robustness under different evaluation conditions. Throughout, we use a tolerance \(\beta=2\%\); thus, for a 90\% target, acceptable coverage/accuracy lies in \([88\%,92\%]\) over a range of \(\epsilon_{\mathrm{test}}\) provided \(\epsilon_{\mathrm{cal}}\) is chosen appropriately.

Across all models and datasets, the results align with Theorems~\ref{thm:theorem1}--\ref{thm:theorem3}. In Fig.~\ref{fig:vit_accuracy}, with adversarial training at \(\epsilon_{\mathrm{train}}=4/255\), test accuracy is nonincreasing in \(\epsilon_{\mathrm{test}}\) for \(\epsilon_{\mathrm{cal}}\in\{8/255,16/255\}\); for fixed \(\epsilon_{\mathrm{test}}\), larger \(\epsilon_{\mathrm{cal}}\) yields higher accuracy (Theorem~\ref{thm:theorem1}). For fixed \(\epsilon_{\mathrm{cal}}\), the \(\epsilon_{\mathrm{test}}\) interval that keeps performance within the 88\%–92\% band widens as \(\epsilon_{\mathrm{cal}}\) increases, while prediction-set sizes remain nearly unchanged (Theorem~\ref{thm:theorem2}). Fig.~\ref{fig:vit_setsize} shows that adversarially trained models produce substantially smaller prediction sets than cleanly trained models on CIFAR-10/100, MNIST and TinyImageNet, consistent with Theorem~\ref{thm:theorem3}. Additional results and the resnet50d counterparts appear in Appendix~\ref{sec:all-experiments}. 

\section{Conclusion}
This work bridges the gap between conformal prediction theory and adversarial robustness, providing both theoretical insights and empirical evidence on how adversarial perturbations influence prediction set validity. By formalizing the relationship between calibration attack strength and test-time robustness, we establish conditions under which target coverage can be maintained within predefined tolerances. Our analysis further reveals that adversarial training not only enhances robustness but also reduces prediction set size through increased output entropy, thereby improving informativeness. Extensive experiments across diverse datasets and architectures confirm the monotonicity of coverage behavior, the feasibility of accuracy-range guarantees, and the efficiency gains from adversarial training. Collectively, these results offer a principled framework for deploying conformal prediction in adversarial environments, paving the way for reliable uncertainty quantification in safety-critical applications. 

\bibliographystyle{apalike}  % Use a style that supports author-year format
\bibliography{citation}

\section*{Checklist}

 \begin{enumerate}

 \item For all models and algorithms presented, check if you include:
 \begin{enumerate}
   \item A clear description of the mathematical setting, assumptions, algorithm, and/or model. [Yes]
   \item An analysis of the properties and complexity (time, space, sample size) of any algorithm. [Yes]
   \item (Optional) Anonymized source code, with specification of all dependencies, including external libraries. [Yes]
 \end{enumerate}

 \item For any theoretical claim, check if you include:
 \begin{enumerate}
   \item Statements of the full set of assumptions of all theoretical results. [Yes]
   \item Complete proofs of all theoretical results. [Yes]
   \item Clear explanations of any assumptions. [Yes]     
 \end{enumerate}

 \item For all figures and tables that present empirical results, check if you include:
 \begin{enumerate}
   \item The code, data, and instructions needed to reproduce the main experimental results (either in the supplemental material or as a URL). [Yes]
   \item All the training details (e.g., data splits, hyperparameters, how they were chosen). [Yes]
         \item A clear definition of the specific measure or statistics and error bars (e.g., with respect to the random seed after running experiments multiple times). [Yes]
         \item A description of the computing infrastructure used. (e.g., type of GPUs, internal cluster, or cloud provider). [Yes]
 \end{enumerate}

 \item If you are using existing assets (e.g., code, data, models) or curating/releasing new assets, check if you include:
 \begin{enumerate}
   \item Citations of the creator If your work uses existing assets. [Not Applicable]
   \item The license information of the assets, if applicable. [Not Applicable]
   \item New assets either in the supplemental material or as a URL, if applicable. [Not Applicable]
   \item Information about consent from data providers/curators. [Not Applicable]
   \item Discussion of sensible content if applicable, e.g., personally identifiable information or offensive content. [Yes] Nvidia-H200
 \end{enumerate}

 \item If you used crowdsourcing or conducted research with human subjects, check if you include:
 \begin{enumerate}
   \item The full text of instructions given to participants and screenshots. [Not Applicable]
   \item Descriptions of potential participant risks, with links to Institutional Review Board (IRB) approvals if applicable. [Not Applicable]
   \item The estimated hourly wage paid to participants and the total amount spent on participant compensation. [Not Applicable]
 \end{enumerate}

 \end{enumerate}

\newpage
\FloatBarrier
\clearpage
\onecolumn
\appendix
\startcontents[app]
\begin{apptocbox}
  \printcontents[app]{l}{1}{\setcounter{tocdepth}{2}}
\end{apptocbox}

\section{Proofs}

\subsection{Lemma 1}\label{sec:lemma1}
\begin{lemma}
Let $f:=f_y(x)\in\mathbb{R}$ be a scalar score and $g:=\|\nabla f\|\ge 0$. 
Let $F_f$ and $f_f$ denote the CDF and PDF of $f$, respectively. 
For $\varepsilon\ge 0$ define $Z_\varepsilon:=f-\varepsilon g$ and 
$q(\varepsilon):=Q_{1-\alpha}(Z_\varepsilon)$, i.e.\ $F_{Z_\varepsilon}(q(\varepsilon))=1-\alpha$.
Assume:
\begin{enumerate}
\item $f_f\big(q(0)\big)>0$ and $F_f$ is differentiable at $q(0):=Q_{1-\alpha}(f)$;
\item the conditional expectation $m(x):=\mathbb{E}[g\,|\,f=x]$ is continuous at $x=q(0)$;
\item $(f,g)$ has a joint density continuous near $(x,\cdot)$ so that differentiation under the integral is valid (e.g., by dominated convergence).
\end{enumerate}
Then, as $\varepsilon\to 0$,
\begin{equation}\label{eq:main-quantile-shift}
Q_{1-\alpha}\!\big(f-\varepsilon g\big)
=Q_{1-\alpha}(f)\;-\;\varepsilon\,\mathbb{E}\!\left[g\,\middle|\,f=Q_{1-\alpha}(f)\right]\;+\;o(\varepsilon).
\end{equation}
If moreover $0\le g\le L$ a.s., then 
\begin{equation}\label{eq:Lipschitz-bound}
\big|\,Q_{1-\alpha}(f-\varepsilon g)-Q_{1-\alpha}(f)\,\big|\le \varepsilon L + o(\varepsilon).
\end{equation}
\end{lemma}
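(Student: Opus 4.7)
The plan is to treat $q(\varepsilon) := Q_{1-\alpha}(Z_\varepsilon)$ as the implicitly defined solution of $F_{Z_\varepsilon}(q(\varepsilon)) = 1-\alpha$, expand $F_{Z_\varepsilon}$ jointly in its two arguments near $(q(0), 0)$, and read off $q'(0)$ by the implicit function theorem. The two ingredients needed are (i) a first-order expansion of $F_{Z_\varepsilon}(t)$ in $\varepsilon$ at $\varepsilon = 0$, and (ii) differentiability of $F_{Z_\varepsilon}$ in $t$ at $t = q(0)$, which is given by assumption (1).

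First I would write, using the joint density $p(u,v)$ of $(f,g)$,
\[
F_{Z_\varepsilon}(t) \;=\; \Pr(f - \varepsilon g \le t) \;=\; \int_0^\infty \int_{-\infty}^{t+\varepsilon v} p(u,v)\,du\,dv.
\]
By assumption (3) one may differentiate under the integral in $\varepsilon$; at $\varepsilon = 0$ this yields
\[
\partial_\varepsilon F_{Z_\varepsilon}(t)\big|_{\varepsilon=0}
\;=\; \int_0^\infty v\, p(t,v)\,dv
\;=\; f_f(t)\,m(t),
\]
with $m(t) = \mathbb{E}[g \mid f = t]$. Likewise $\partial_t F_{Z_\varepsilon}(t)\big|_{\varepsilon=0} = f_f(t)$. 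Together with the continuity of $m$ at $q(0)$ (assumption (2)) and $f_f(q(0)) > 0$ (assumption (1)), the map $(t,\varepsilon) \mapsto F_{Z_\varepsilon}(t)$ is $C^1$ in a neighborhood of $(q(0), 0)$ with nonvanishing $t$-derivative.

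Next I would invoke the implicit function theorem on $F_{Z_\varepsilon}(q(\varepsilon)) = 1-\alpha$ to conclude that $q$ is differentiable at $0$ with
\[
q'(0) \;=\; -\,\frac{\partial_\varepsilon F_{Z_\varepsilon}(q(0))\big|_{\varepsilon=0}}{\partial_t F_{Z_0}(q(0))} \;=\; -\,\frac{f_f(q(0))\,m(q(0))}{f_f(q(0))} \;=\; -\,m(q(0)).
\]
A first-order Taylor expansion of $q$ at $0$ then gives \eqref{eq:main-quantile-shift}. For the bounded case \eqref{eq:Lipschitz-bound}, I would simply combine this with $0 \le m(q(0)) \le L$; alternatively, and more transparently, the sandwich $f - \varepsilon L \le Z_\varepsilon \le f$ together with monotonicity and translation invariance of quantiles gives $Q_{1-\alpha}(f) - \varepsilon L \le q(\varepsilon) \le Q_{1-\alpha}(f)$ outright, which is strictly stronger than the stated bound.

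The main obstacle is purely technical: justifying the interchange of $\partial_\varepsilon$ and the double integral. The cleanest route is to invoke dominated convergence on the difference quotient
\[
\frac{F_{Z_\varepsilon}(t) - F_{Z_0}(t)}{\varepsilon} \;=\; \int_0^\infty \frac{1}{\varepsilon}\!\int_{t}^{t+\varepsilon v} p(u,v)\,du\,dv,
\]
bounding the inner quantity by $v\,\sup_{u \in [t-\delta,t+\delta]} p(u,v)$ for small $\varepsilon$; assumption (3) on joint-density continuity near $(q(0), \cdot)$ supplies a local dominating function provided $\mathbb{E}[g] < \infty$, which is implicit in the hypothesis that $m(q(0))$ is well defined and finite. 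If one prefers to avoid any integrability assumption on $g$ globally, one can truncate $g$ at a large threshold $M$, apply the argument on the truncated variable, and let $M \to \infty$ using the continuity of $m$ at $q(0)$. Beyond this point, the rest of the argument is a direct application of the implicit function theorem.
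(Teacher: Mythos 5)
Your proposal is correct and follows essentially the same route as the paper's proof: both express $F_{Z_\varepsilon}(t)$ as a double integral over the joint density, differentiate under the integral in $\varepsilon$ to obtain $f_f(t)\,m(t)$, extract $q'(0)=-m(q(0))$ by differentiating the identity $F_{Z_\varepsilon}(q(\varepsilon))=1-\alpha$ (your implicit-function-theorem phrasing is just the formal version of the paper's chain-rule computation), and use the sandwich $f-\varepsilon L\le Z_\varepsilon\le f$ with quantile monotonicity for the bounded case. Your added care about the dominated-convergence justification and the truncation of $g$ is a welcome refinement of a step the paper asserts without detail.
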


\begin{proof}{Proof of Lemma 1}
\newline
For fixed $x\in\mathbb{R}$,
\begin{align*}
F_{Z_\varepsilon}(x)
&=\mathbb{P}(f-\varepsilon g\le x)
=\iint \mathbf{1}\{u-\varepsilon v\le x\}\,p_{f,g}(u,v)\,du\,dv \\
&=\int\!\left(\int_{-\infty}^{x+\varepsilon v} p_{f,g}(u,v)\,du\right) dv.
\end{align*}
Differentiating w.r.t.\ $\varepsilon$ at $0$ gives
\begin{equation}\label{eq:derivative-cdf}
\left.\frac{\partial}{\partial \varepsilon}F_{Z_\varepsilon}(x)\right|_{\varepsilon=0}
=\int v\,p_{f,g}(x,v)\,dv
=\int v\,p_{g|f}(v\,|\,x)\,p_f(x)\,dv
=f_f(x)\,\mathbb{E}[g\,|\,f=x].
\end{equation}

By definition $F_{Z_\varepsilon}(q(\varepsilon))=1-\alpha$ for all $\varepsilon$.
Differentiate both sides at $\varepsilon=0$ and apply the chain rule:
\[
0=\left.\frac{\partial}{\partial \varepsilon}F_{Z_\varepsilon}(q(\varepsilon))\right|_{\varepsilon=0}
=\underbrace{\left.\frac{\partial}{\partial \varepsilon}F_{Z_\varepsilon}(x)\right|_{\varepsilon=0}}_{\text{use \eqref{eq:derivative-cdf} at }x=q(0)}
+\underbrace{\left.\frac{\partial}{\partial x}F_{Z_0}(x)\right|_{x=q(0)}}_{=\,f_f(q(0))}\,q'(0).
\]
Hence $0=f_f(q(0))\,\mathbb{E}[g\,|\,f=q(0)]+f_f(q(0))\,q'(0)$, so
$q'(0)=-\mathbb{E}[g\,|\,f=q(0)]$.
A first-order Taylor expansion of $q(\varepsilon)$ at $0$ yields \eqref{eq:main-quantile-shift}.
For \eqref{eq:Lipschitz-bound}, if $0\le g\le L$ a.s., then
$f-\varepsilon L \le f-\varepsilon g \le f$ a.s.; monotonicity of quantiles gives
$Q_{1-\alpha}(f-\varepsilon L)\le Q_{1-\alpha}(f-\varepsilon g)\le Q_{1-\alpha}(f)$,
implying the claimed bound.
\end{proof}

\subsection{Proof of Theorem 1}\label{sec:proof-thm1}
\emph{Proof of theorem 1}:
We are considering the non-comformity score HPS $S(x,y) = 1- f_{y}(x)$ where $f$ is the trained model and $f_{y}(x)$ means the probability for the model to make the correct prediction given observation $x$. Now suppose we have test-time attack denoted as $\epsilon_{test}$, and $||A_{test}|| \le 1$ which is the vector indicating the direction of attack.
Thus we have the attacked non-comformity score by taking taylor expansion with respect to $\epsilon_{test}$ as:
\begin{eqnarray*}
    S(\tilde{x}_{test})&&= 1 - f_{y_{test}}(\tilde{x}_{test}) \\
                 &&= 1 - f_{y_{test}}(x_{test} + \epsilon_{test} A_{test}^\top) \\
                 &&= 1 - \left( f_{y_{test}}(x_{test}) + \epsilon_{test} A_{test}^\top \nabla f + o(\epsilon_{test}^2) \right)
\end{eqnarray*}
Now based on the expansion of the prediction accuracy, we can further expand the probability for a test data label with in the prediction set generated by the $\delta_{2}$ attacked calibration set as:
\begin{eqnarray*}
    &&\mathbb{P}\left(y_{test} \in C_{\epsilon_{cal}}(x_{test}+\epsilon_{test} A_{test})\right)\\
    &=&\mathbb{P}\left(S(x_{test}+\epsilon_{test} A_{test},y_{test})\le Q_{1-\alpha}\left(S(x_{cal}+\epsilon_{cal} A_{cal},y_{cal})\right)\right)\\
    &=&\mathbb{P}\left(1 - \left( f_{y_{test}}(x_{test}) + \epsilon_{test} A_{test}^\top \nabla f_{x_{test}} + o(\epsilon_{test}^2) \right) \le Q_{1-\alpha}\left(1 - \left( f_{y_{cal}}(x_{cal}) + \epsilon_{cal} A_{cal}^\top \nabla f_{x_{cal}} + o(\epsilon_{cal}^2) \right)\right)\right)\\
    &=&\mathbb{P}\left( f_{y_{test}}(x_{test}) + \epsilon_{test} A_{test}^\top \nabla f_{x_{test}} + o(\epsilon_{test}^2)\ge Q_{1-\alpha}\left( f_{y_{cal}}(x_{cal}) + \epsilon_{cal} A_{i}^\top \nabla f_{x_{cal}} + o(\epsilon_{cal}^2) \right)\right)\\
    &=&\mathbb{P}\left( f_{y_{test}}(x_{test}) + (\epsilon_{test} -\epsilon_{cal}+\epsilon_{cal})A_{test}^\top \nabla f_{x_{test}} + o(\epsilon_{test}^2)\ge Q_{1-\alpha}\left( f_{y_{cal}}(x_{cal}) + \epsilon_{cal}A_{cal}^\top \nabla f_{x_{cal}} \right)\right)\\
    &=&\mathbb{P}\left( f_{y_{test}}(x_{test}) + \epsilon_{cal} A_{test}^\top \nabla f_{x_{test}} + o(\epsilon_{test}^2+\epsilon_{cal}^2)+(\epsilon_{test} -\epsilon_{cal})A_{test}^\top \nabla f_{x_{test}}
    \ge Q_{1-\alpha}\left( f_{y_{cal}}(x_{cal}) + \epsilon_{cal}A_{cal}^\top \nabla f_{x_{cal}} \right)\right).
\end{eqnarray*}

For $A$, by the definition of adversarial training, we have
\begin{eqnarray*}
    A_{test}\propto\frac{\partial L(f(x_{test}),y_{test})}{\partial x_{test}}=\frac{\partial L(f(x_{test}),y_{test})}{\partial f_{y_{test}}(x_{test})}\frac{\partial f_{y_{test}}(x_{test})}{\partial x_{test}},
\end{eqnarray*}
and $\|A_{test}\|=1$. 

Since $L$ is a loss function, we generally assume that 
\begin{eqnarray*}
    \frac{\partial L(f(x_{test}),y_{test})}{\partial f_{y_{test}}(x_{test})}<0.
\end{eqnarray*}

Thus,
\begin{eqnarray*}
   A_{test}^{\top} \nabla f_{x_{test}} =-\|\nabla f_{x_{test}}\|
\end{eqnarray*}

Denote $\mathbb{E}\!\left[ \|\nabla f_{x_{test}}\|\,\middle|\,f_{y_{test}}(x_{test})=Q_{1-\alpha}(f_{y_{cal}}(x_{cal}))\right]=c$. In this case, based on Lemma \ref{sec:lemma1}, the probability can be rewritten as:
\begin{eqnarray*}
&&\mathbb{P}\left( f_{y_{test}(x_{test})}\ge (2\epsilon_{test}-\epsilon_{cal})||\nabla f_{x_{test}}||-\epsilon_{cal}\cdot c +o(\epsilon_{test}^{2}+\epsilon_{cal}^{2})+Q_{1-\alpha}(f_{y_{cal}}(x_{cal}))\right)
\end{eqnarray*}

We divide the real case into two scenarios:
\begin{enumerate}
    \item $\epsilon_{test}>\epsilon_{cal}$:
\begin{eqnarray*}
    &&\mathbb{P}\left( f_{y_{test}(x_{test})}\ge (2\epsilon_{test}-\epsilon_{cal})||\nabla f_{x_{test}}||-\epsilon_{cal}\cdot c +o(\epsilon_{test}^{2}+\epsilon_{cal}^{2})+Q_{1-\alpha}(f_{y_{cal}}(x_{cal}))\right)\\
    &=&\mathbb{P}\left( f_{y_{test}(x_{test})}\ge Q_{1-\alpha}(f_{y_{cal}}(x_{cal}))\right)\\
    &-&\mathbb{P}\left((2\epsilon_{test}-\epsilon_{cal})||\nabla f_{x_{test}}||-\epsilon_{cal}\cdot c +o(\epsilon_{test}^{2}+\epsilon_{cal}^{2})+Q_{1-\alpha}(f_{y_{cal}}(x_{cal}))\ge f_{y_{test}}(x_{test})\ge Q_{1-\alpha}(f_{y_{cal}}(x_{cal}))\right)\\
    &=&1-\alpha +e_{train}+d_{cal}+e_{cal}\\
    &-&\mathbb{P}\left((2\epsilon_{test}-\epsilon_{cal})||\nabla f_{x_{test}}||-\epsilon_{cal}\cdot c +o(\epsilon_{test}^{2}+\epsilon_{cal}^{2})+Q_{1-\alpha}(f_{y_{cal}}(x_{cal}))\ge f_{y_{test}}(x_{test})\ge Q_{1-\alpha}(f_{y_{cal}}(x_{cal}))\right)
\end{eqnarray*}
    \item $\epsilon_{cal}> \epsilon_{test}$:
\begin{eqnarray*}
    &&\mathbb{P}\left( f_{y_{test}(x_{test})}\ge (2\epsilon_{test}-\epsilon_{cal})||\nabla f_{x_{test}}||-\epsilon_{cal}\cdot c +o(\epsilon_{test}^{2}+\epsilon_{cal}^{2})+Q_{1-\alpha}(f_{y_{cal}}(x_{cal}))\right)\\
    &=&\mathbb{P}\left( f_{y_{test}(x_{test})}\ge Q_{1-\alpha}(f_{y_{cal}}(x_{cal}))\right)\\
    &+&\mathbb{P}\left((2\epsilon_{test}-\epsilon_{cal})||\nabla f_{x_{test}}||-\epsilon_{cal}\cdot c +o(\epsilon_{test}^{2}+\epsilon_{cal}^{2})+Q_{1-\alpha}(f_{y_{cal}}(x_{cal}))\le f_{y_{test}}(x_{test})\le Q_{1-\alpha}(f_{y_{cal}}(x_{cal}))\right)\\
    &=&1-\alpha +e_{train}+d_{cal}+e_{cal}\\
    &+&\mathbb{P}\left((2\epsilon_{test}-\epsilon_{cal})||\nabla f_{x_{test}}||-\epsilon_{cal}\cdot c +o(\epsilon_{test}^{2}+\epsilon_{cal}^{2})+Q_{1-\alpha}(f_{y_{cal}}(x_{cal}))\le f_{y_{test}}(x_{test})\le Q_{1-\alpha}(f_{y_{cal}}(x_{cal}))\right)
\end{eqnarray*}
\end{enumerate}
Due to the assumption of smoothness of the distribution function $G(f_{y}(x))$, we can estimate the probabilities as:
\begin{eqnarray*}
    &&\mathbb{P}\left((2\epsilon_{test}-\epsilon_{cal})||\nabla f_{x_{test}}||-\epsilon_{cal}\cdot c +o(\epsilon_{test}^{2}+\epsilon_{cal}^{2})+Q_{1-\alpha}(f_{y_{cal}}(x_{cal}))\le f_{y_{test}}(x_{test})\le Q_{1-\alpha}(f_{y_{cal}}(x_{cal}))\right)\\
    &=&\left((2\epsilon_{test}-\epsilon_{cal})||\nabla f_{x_{test}}||-\epsilon_{cal}\cdot c +o(\epsilon_{test}^{2}+\epsilon_{cal}^{2})\right)g(Q_{1-\alpha}(f_{y_{cal}}(x_{cal})))\\
    &+&o\left(\left((2\epsilon_{test}-\epsilon_{cal})||\nabla f_{x_{test}}||-\epsilon_{cal}\cdot c+o(\epsilon_{test}^{2}+\epsilon_{cal}^{2})\right)^{2}g^{2}(f_{y}(x) = Q_{1-\alpha}(f_{y_{cal}}(x_{cal}))\right)\\
    &=&\left((2\epsilon_{test}-\epsilon_{cal})||\nabla f_{x_{test}}||-\epsilon_{cal}\cdot c +o(\epsilon_{test}^{2}+\epsilon_{cal}^{2})\right)g( Q_{1-\alpha}(f_{y_{cal}}(x_{cal})))+o\left(\epsilon_{cal}^{2}+\epsilon_{test}^{2}\right)\\
\end{eqnarray*}

Thus, we can conclude the true coverage of $C_{\epsilon_{cal}}(x_{test}+\epsilon_{test}A_{test})$ is:
\begin{eqnarray*}
    &&|\mathbb{P}(y_{test}\in C(x_{test}+\epsilon_{test}A_{test}))-(1-\alpha)|\le e_{train}+d_{cal}+e_{cal}\\
    &+&\left((2\epsilon_{test}-\epsilon_{cal})||\nabla f_{x_{test}}||-\epsilon_{cal}\cdot c\right)g(f_{y}(x) = Q_{1-\alpha}(f_{y_{cal}}(x_{cal}))+o\left(\epsilon_{cal}^{2}+\epsilon_{test}^{2}\right)
\end{eqnarray*}

\subsection{Proof of Theorem 2}\label{sec:proof-thm2}
\emph{Proof of theorem 2}:\\
Assume $\beta$ is a fixed small value (e.g., 2\%).$\beta\asymp \epsilon_{test}\asymp \epsilon_{cal}$. As proved in Theorem \ref{thm:theorem1} we can have:
\begin{eqnarray*}
    &&|\mathbb{P}(y_{test}\in C(x_{test}+\epsilon_{test}A_{test}))-(1-\alpha)|\le e_{train}+d_{cal}+e_{cal}+\left((2\epsilon_{test}-\epsilon_{cal})||\nabla f_{x_{test}}||-\epsilon_{cal}\cdot c +o(\epsilon_{test}^{2}+\epsilon_{cal}^{2})\right)g(Q_{1-\alpha}(f_{y_{cal}}\\
    &&(x_{cal}))+o\left(\epsilon_{cal}^{2}+\epsilon_{test}^{2}\right)
\end{eqnarray*}
Given $\beta$,the test attack strength we can attain the prediction accuracy inside $[1-\alpha-\beta,1-\alpha+\beta]$ given $\epsilon_{cal}$ is:
\begin{eqnarray*}
    &&-\beta < e_{train}+d_{cal}+e_{cal}+\left((2\epsilon_{test}-\epsilon_{cal})||\nabla f_{x_{test}}||-\epsilon_{cal}\cdot c +o(\epsilon_{test}^{2}+\epsilon_{cal}^{2})\right)g(Q_{1-\alpha}(f_{y_{cal}}(x_{cal})) <\beta\\
    &&\Leftrightarrow -\beta - (e_{train}+d_{cal}+e_{cal})<\left((2\epsilon_{test}-\epsilon_{cal})||\nabla f_{x_{test}}||-\epsilon_{cal}\cdot c\right)g(Q_{1-\alpha}(f_{y_{cal}}(x_{cal}))<\beta\\
    &&-(e_{train}+d_{cal}+e_{cal})\\
    &&\Leftrightarrow \frac{-\beta - (e_{train}+d_{cal}+e_{cal})}{g(Q_{1-\alpha}(f_{y_{cal}}(x_{cal}))} < (2\epsilon_{test}\|\nabla f_{test}\|-(c+||\nabla f_{x_{test}}||)\epsilon_{cal})<\frac{\beta + (e_{train}+d_{cal}+e_{cal})}{g( Q_{1-\alpha}(f_{y_{cal}}(x_{cal}))}\\
    &&\Leftrightarrow \frac{-\beta - (e_{train}+d_{cal}+e_{cal})}{2||\nabla f_{x_{test}}||g(Q_{1-\alpha}(f_{y_{cal}}(x_{cal}))}+\frac{(c+||\nabla f_{x_{test}}||)\epsilon_{cal}}{2||\nabla f_{x_{test}}||} < \epsilon_{test}<\frac{\beta - (e_{train}+d_{cal}+e_{cal})}{2||\nabla f_{x_{test}}||g(Q_{1-\alpha}(f_{y_{cal}}(x_{cal}))}\\
    &&+\frac{(c+||\nabla f_{x_{test}}||)\epsilon_{cal}}{2||\nabla f_{x_{test}}||}\\
\end{eqnarray*}
In this case, the length of the tolerence set for $\epsilon_{test}$ is:
\begin{eqnarray*}
    \frac{2\beta}{2||\nabla f_{x_{test}}||g(Q_{1-\alpha}(f_{y_{cal}}(x_{cal}))}
\end{eqnarray*}

\subsection{Proof of Theorem 3}\label{sec:proof-thm3}
\emph{Proof of theorem 3}:\\
The expected number of classes included in the prediction set given by Split CP is:
\begin{eqnarray*}     
&& \mathbb{E}\!\left( \sum_{y=1}^{k} \mathbbm{1}\!\left( f_y(x_{test}) \geq 1-Q \right) \right) \\
&&= \mathbb{E}\!\left( \sum_{y \ne y_{true}}^{k} \mathbbm{1}\!\left( f_y(x_{test}) \geq 1-Q \right) \right) 
   + P\!\left( f_{y_{true}}(x_{n+1}) \geq 1-Q \right) \\[2pt]
\end{eqnarray*}
Based on Theorem \ref{thm:theorem1}, for Split CP we can have the prediction accuracy guarantee for the true class with probability $1-\alpha \pm o(\epsilon_{cal})$.
Meanwhile for the other $K-1$ classes not including the true one, the predictive probability distribution given by $\hat{f}$ has the probability $1-P_{y,\text{true}}$ to make a wrong prediction.
Since the threshold $Q$ given by Split CP is derived by selecting the $(1-\alpha)(1+1/|I_{2}|)$-quantile of the noncomformity score $1-\hat{f}_{y_{true}}(x)$, where $(x,y)$ are from the calibration set.
We can bound the expected set size of Split CP by:
\begin{eqnarray*}
&& \sum_{y \ne y_{true}} P\!\left( f_y(x_{test}) \geq 1-Q \right) 
   + P\!\left( f_{y_{true}}(x_{test}) \geq 1-Q \right) \\[1pt]
&& \le 1 - \alpha \;\pm\; o(\epsilon_{cal})
   + \sum_{y \ne y_{true}} P\!\left( f_y(x_{test}) \geq 1-Q \right) \\[1pt]
\end{eqnarray*}

If we assume the chance for the predictive model $f$ assign probabilities to each mismatched classes uniformly, then the event $(f_{y=1}(x_{n+1}) \geq 1-Q, f_{y=2}(x_{n+1}) \geq 1-Q,\cdots, f_{y=K}(x_{n+1})_{} \geq 1-Q)_{y\ne y_{true}} \sim (1-P_{y_{true}})\mathrm{Dirichlet}(\underbrace{1,1,\ldots,1}_{K-1})$. Due to the symmetric of this distribution, for one single class:
\begin{eqnarray*}
&& P\!\left( f_{y}(x_{test}) \ge 1-Q \right) \\[1pt]
&&= P\!\left( \frac{f_{y}(x_{test})}{\,1-P_{y,true}\,} \;\ge\; \frac{1-Q}{\,1-P_{y,true}\,} \right) \\[1pt]
&&= P\!\left( \mathrm{Beta}(1, K-2) \;\ge\; \frac{1-Q}{\,1-P_{y,true}\,} \right) \\[1pt]
&&=\left( 1 - \frac{1-Q}{\,1-P_{y,true}\,} \right)^{K-2}
\end{eqnarray*}

Thus, the expected set size by Split CP is bounded by:
\begin{eqnarray*}
\mathbb{E}\!\left( C(x_{test}) \right) 
&\le& 1 - \alpha \;\pm\; o(\epsilon_{cal}) 
\;+\; (K-1)\!\left( 1 - \frac{1-Q}{\,1-P_{y,\text{true}}\,} \right)^{K-2}
\end{eqnarray*}

As stated in Example \ref{ex:example1}, adversarial training will lead $P_{y,{true}}$ larger compared to clean training. The following part focuses on proving that $(1-Q)/(1-P_{y_{true}})$ will be smaller under clean training compared to adversarial training.

When there exists adversarial attack on calibration set, we'd expect for all the data in calibration set adversarial training will lead to smaller nonconformity scores $S(x,y)$ make the $(1-\alpha)(1+1/|I_{2}|)$ quantile smaller. In general, we have
\begin{eqnarray*}
    \frac{1-Q^{adv}}{1-P_{y,\text{true}}^{adv}} <  \frac{1-Q^{clean}}{1-P_{y,\text{true}}^{clean}}
\end{eqnarray*}
Thus adversarial training will make the expected set size smaller compared to clean training.

\section{Experiments results}\label{sec:all-experiments}

\subsection{Numerical results}

% Requires \usepackage{booktabs,adjustbox}
% Requires: \usepackage{booktabs}, \usepackage{longtable}, \usepackage{pdflscape}
\begingroup
\small
\setlength\tabcolsep{3pt}
\setlength\LTleft{\fill}\setlength\LTright{\fill}
% [inline block 0: 8 envs, 110123 chars -> data_tex | \begin{longtable}{rrrrrrr} \caption{ViT-MAE results on CIFAR10.}...]


\endgroup

\newpage

\subsection{Figures for Resnet50d}

% --- resnet Accuracy (single column) ---
\begin{figure}[!ht]
    \centering
    \includegraphics[width=0.7\columnwidth]{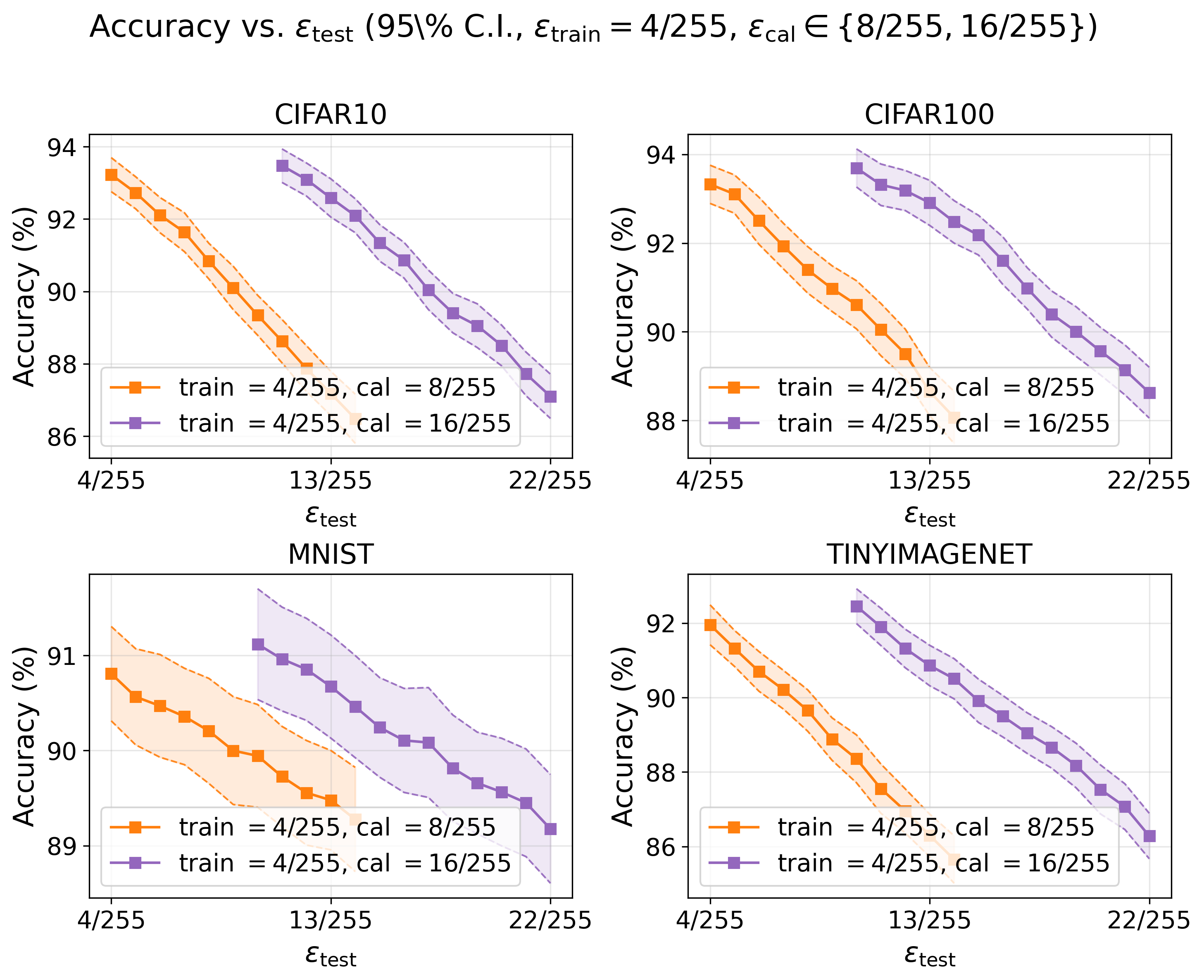}
    \caption{Resnet50d Accuracy}
    \label{fig:resnet_accuracy}
\end{figure}

% --- resnet Set Size (single column) ---
\begin{figure}[!ht]
    \centering
    \includegraphics[width=0.7\columnwidth]{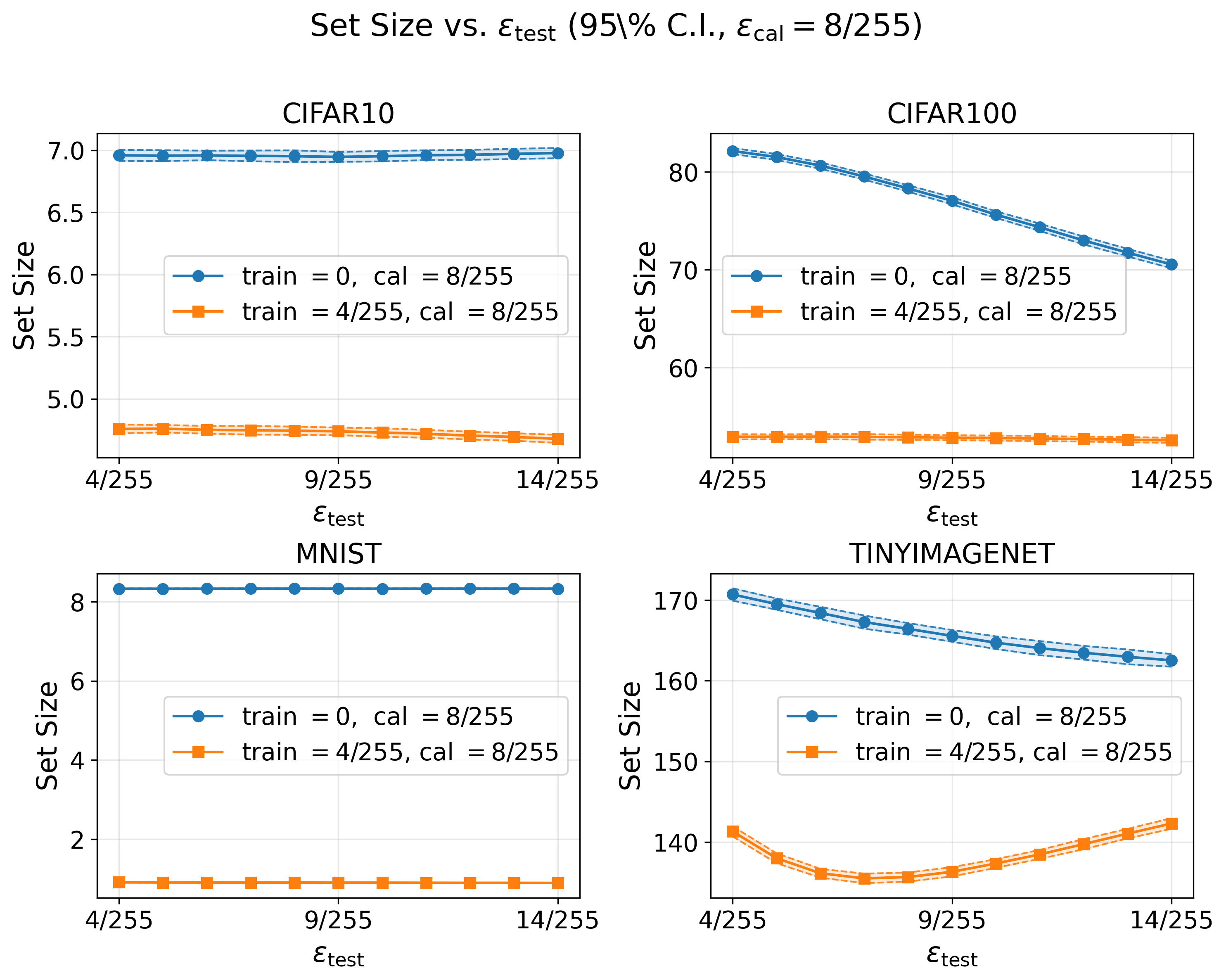}
    \caption{Resnet50d Set Size }
    \label{fig:resnet_setsize}
\end{figure}

\noindent\textbf{ResNet50d}
Across CIFAR10/100, MNIST, and TinyImageNet, the qualitative trends mirror our ViT results:
\begin{itemize}
  \item \textit{Accuracy vs.\ \(\epsilon_{\text{test}}\).} Accuracy decreases monotonically as \(\epsilon_{\text{test}}\) increases, while a stronger calibration set (\(\epsilon_{\text{cal}}=16/255\) vs.\ \(8/255\)) shifts the curves upward and enlarges the \(\epsilon_{\text{test}}\) range that meets the target accuracy band (95\% CIs shown).
  \item \textit{Effect of adversarial training on set size.} Under \(\epsilon_{\text{train}}=4/255\), adversarial training produces substantially smaller and more stable conformal prediction sets than clean training on CIFAR10/100 and MNIST, again echoing ViT.
\end{itemize}

\noindent\textit{Summary.} ResNet50d reproduces the ViT conclusions: stronger calibration improves test-time robustness, and adversarial training stabilizes and shrinks conformal set sizes while maintaining comparable accuracy.

\end{document}